\documentclass{article} 
\usepackage{iclr2026_conference,times}
\usepackage{amsthm}

\newtheorem{proposition}{Proposition}

\usepackage{amsmath,amsfonts,bm}

\def\eqref#1{equation~\ref{#1}}

\def\1{\bm{1}}

\def\rvx{{\mathbf{x}}}

\def\rvz{{\mathbf{z}}}

\DeclareMathAlphabet{\mathsfit}{\encodingdefault}{\sfdefault}{m}{sl}
\SetMathAlphabet{\mathsfit}{bold}{\encodingdefault}{\sfdefault}{bx}{n}

\newcommand{\E}{\mathbb{E}}

\newcommand{\KL}{D_{\mathrm{KL}}}

\DeclareMathOperator*{\argmin}{arg\,min}

\usepackage{hyperref}
\usepackage{url}
\usepackage{tikz}
\usetikzlibrary{positioning, fit, calc, arrows.meta, shapes}
\usetikzlibrary{automata,arrows}
\usetikzlibrary{shapes.multipart}
\usetikzlibrary{fit, backgrounds, decorations.pathreplacing}
\usepackage[capitalize]{cleveref}
\usepackage{booktabs}
\usepackage{multirow}
\makeatletter
\renewcommand{\@fnsymbol}[1]{\ifcase#1\or\dagger\or\ddagger\else\@ctrerr\fi}
\makeatother

\title{Inference-Time Rethinking with\\ Latent Thought Vectors for Math Reasoning}

\author{Deqian Kong$^{1,2,}$\thanks{Equal Contribution, $^\ddagger$Equal Advising. Work done while D.~K. was an intern at Lambda, Inc.}~, Minglu Zhao$^{1,\dagger}$, Aoyang Qin$^{3,\dagger}$, Bo Pang$^{4}$, Chenxin Tao$^{3}$, \\ \textbf{David Hartmann}$^{2}$,  \textbf{Edouardo Honig}$^{1}$\textbf{,} \textbf{Dehong Xu}$^{1}$\textbf{,} \textbf{Amit Kumar}$^{2}$\textbf{,} \textbf{Matt Sarte}$^{2}$\textbf{,} \\\textbf{Chuan Li}$^{2}$\textbf{,}  \textbf{Jianwen Xie}$^{2,\ddagger}$\textbf{,} \textbf{Ying Nian Wu$^{1,\ddagger}$}\\
$^{1}$ Department of Statistics and Data Science, UCLA \\
$^{2}$ Lambda Inc, $^{3}$ Tsinghua University, $^{4}$ Salesforce Research\\
}

\usepackage{enumitem}
\definecolor{customlinkcolor}{HTML}{2774AE} 
\definecolor{customcitecolor}{HTML}{2774AE} 

\hypersetup{
    colorlinks=true,
    linkcolor=customlinkcolor, 
    anchorcolor=black, 
    citecolor=customcitecolor  
}

\iclrfinalcopy 
\begin{document}

\maketitle

\begin{abstract}
Standard chain-of-thought reasoning generates a solution in a single forward pass, committing irrevocably to each token and lacking a mechanism to recover from early errors. We introduce \emph{Inference-Time Rethinking}, a generative framework that enables iterative self-correction by decoupling declarative \textit{latent thought vectors} from procedural generation. We factorize reasoning into a continuous latent thought vector (what to reason about) and a decoder that verbalizes the trace conditioned on this vector (how to reason). Beyond serving as a declarative buffer, latent thought vectors compress the reasoning structure into a continuous representation that abstracts away surface-level token variability, making gradient-based optimization over reasoning strategies well-posed. Our prior model maps unstructured noise to a learned manifold of valid reasoning patterns, and at test time we employ a Gibbs-style procedure that alternates between generating a candidate trace and optimizing the latent vector to better explain that trace, effectively navigating the latent manifold to refine the reasoning strategy. Training a 0.2B-parameter model from scratch on GSM8K, our method with 30 rethinking iterations surpasses baselines with 10--15$\times$ more parameters, including a 3B counterpart. This result demonstrates that effective mathematical reasoning can emerge from sophisticated inference-time computation rather than solely from massive parameter counts. 
\end{abstract}

\section{Introduction}

Human cognition fundamentally distinguishes between \textbf{declarative knowledge} and \textbf{procedural execution}. The declarative system stores facts, episodes, and explicit plans, while the procedural system encodes the skills and routines required to act upon them~\citep{ullman2004contributions}. This functional separation is mirrored in the brain's neural architecture: for example, Wernicke's area processes semantic representations (the meaning of concepts), whereas Broca's area orchestrates syntactic production (the procedural act of speech)~\citep{friederici2011brain}.  This separation allows the brain to hold a ``thought'' in a declarative buffer, inspect it, and revise it before committing to a procedural output. Ideally, a reasoning system should preserve this distinction: maintaining a separation between the \emph{content} of a thought and the \emph{process} of expressing it.

Modern language models collapse this distinction. In a standard Transformer, the semantic content (declarative) and the syntactic realization (procedural) are entangled within the same parameter matrix, retrieved through a single, irrevocable forward pass. There is no independent locus for the reasoning plan—no latent workspace that can be critiqued or refined. Consequently, errors in early generation steps propagate unchecked, as the model lacks the mechanism to backtrack and rethink.

We propose to restore this cognitive separation via \textbf{latent thought vectors}. We factorize reasoning into two components: a {prior encoder} that maps random noise tokens into structured latent thought vectors $\rvz$ (declarative: what to reason about), and a decoder that generates token sequences conditioned on $\rvz$ (procedural: how to verbalize reasoning). Beyond serving as a declarative buffer, latent
thought vectors play a \emph{formalization} role: they compress the
reasoning structure into a continuous representation that abstracts away
surface-level token variability, making gradient-based optimization over
reasoning strategies well-posed.

Crucially, our Transformer encoder maps unstructured noise to a learned
manifold whose geometry reflects valid reasoning patterns, allowing us to
formulate \textbf{reasoning as gradient-based optimization in the space
of valid thoughts}. We introduce \textbf{Inference-Time Rethinking}, a
Gibbs-style procedure that alternates between generation and reflection:
(1)~\textit{Generate:} The decoder produces a candidate reasoning trace
conditioned on the current latent thought $\rvz$.
(2)~\textit{Reflect:} We optimize the input noise via gradient descent
to maximize the likelihood of the generated trace, moving the thought
vector toward a region of the manifold that better explains the desired
logic. This loop transforms inference from a single-pass commitment into
a dynamic optimization trajectory, allowing the model to recover from
errors by navigating the latent space.

This framework also embodies the Complementary Learning Systems theory~\citep{mcclelland1995complementary}. Our dual-rate training scheme mirrors the interplay between fast and slow learning: we perform \textbf{fast} optimization of the latent $\rvz$ for each instance (adapting to the specific problem) and \textbf{slow} updates to the model parameters $\theta$ (accumulating general schema). At test time, the fast pathway remains active, enabling the model to invest compute in per-instance refinement.

By scaling this inference-time computation, we achieve significant performance gains without increasing model size. On GSM8K, our 0.2B-parameter model with 30 rethinking iterations achieves 31.5\% accuracy, outperforming 10--15$\times$ larger models including CoT-SFT on a 3B counterpart (22.7\%) and MARCoS-2B (24.1\%). The gains extend to out-of-domain benchmarks, reaching 51.5\% on SVAMP and 68.0\% on MultiArith. These results demonstrate that effective mathematical reasoning can emerge from the ability to iteratively optimize declarative thoughts, rather than solely from larger parameter counts.

Our contributions are:
\begin{itemize}[itemsep=2pt, topsep=4pt, leftmargin=18pt]
    \item A generative framework that decouples declarative content ($\rvz$) from procedural generation $\theta$. This separation yields efficient learning: a 0.2B model outperforms much larger monolithic architectures even in a single pass, with strong out-of-domain robustness.
    \item \emph{Inference-Time Rethinking}, a Gibbs-style procedure that iteratively refines the latent thought by alternating between trace generation and latent optimization, enabling the model to self-correct at test time.
    \item Empirical validation showing that a 0.2B model with rethinking outperforms 10--15$\times$ larger baselines, establishing inference-time computation as a highly effective scaling axis orthogonal to parameter count.
\end{itemize}
\section{Method}

\subsection{Generative Model with Latent Thought Vectors}
\label{sec:model}

Let $\rvx = (x^{(1)}, \dots, x^{(N)})$ denote a sequence of ground tokens, which may include a question $\rvx_q$ and reasoning trace $\rvx_r$. We introduce latent thought vectors $\rvz \in \mathbb{R}^d$ that guide generation via the joint model
\begin{equation}
    p_\theta(\rvx, \rvz) = p_\alpha(\rvz)\, p_\beta(\rvx | \rvz),
\end{equation}
where $p_\alpha(\rvz)$ is a learnable prior with parameters $\alpha$, and $p_\beta(\rvx | \rvz)$ is an autoregressive decoder with parameters $\beta$. Here $\theta = (\alpha, \beta)$ denotes all model parameters. Unlike standard language models that condition only on preceding tokens, $\rvz$ cross-attends to every decoder layer, providing a global conditioning signal throughout generation (\cref{fig:model}).

\begin{figure}[h]
\centering
\resizebox{0.5\linewidth}{!}{
\begin{tikzpicture}[->, >=stealth', auto, thick, node distance=1.3cm]

    \begin{scope}[shift={(2cm,-0.75cm)}, 
      block/.style={rectangle, draw, fill=black!10, minimum width=3cm, minimum height=0.6cm, align=center, rounded corners},
      arrow/.style={thin, -stealth'},
      bigbox/.style={draw, thick, rounded corners, inner sep=5pt, dashed}]
      
        \node[block] (sa1) {\small Self-attention};

        \begin{scope}[on background layer]
            \node[bigbox, fit=(sa1) ] (mainbox_left) {};
        \end{scope}

        \node[above right=0.1cm and -1cm of sa1.north east, anchor=south west] (N_left) {$\times N_\mathrm{Enc}$};

        \node[below=0.2cm of mainbox_left] (z0) {$\rvz_0$};

        \node[above=0.26cm of mainbox_left] (z_in) {$\rvz=(\rvz_1,...,\rvz_K)$};

        \draw[arrow] (z0) -- (sa1);
        \draw[arrow] (sa1) -- (z_in);

        \node[above=1cm of mainbox_left] {Transformer Encoder};

    \end{scope}

    \draw[-,dashed] (4.25, -1.8) -- (4.25, 2.9);

    \begin{scope}[shift={(6.5cm,1cm)}, 
      block/.style={rectangle, draw, fill=black!10, minimum width=3cm, minimum height=0.6cm, align=center, rounded corners},
      arrow/.style={thin, -stealth'},
      bigbox/.style={draw, thick, rounded corners, inner sep=5pt, dashed}]
    
        \node[block] (cross) {\small Cross-attention};
        \node[block, below=0.82cm of cross] (causal) {\small Self-attention};

        \begin{scope}[on background layer]
            \node[bigbox, fit=(cross) (causal)] (mainbox) {};
        \end{scope}

        \node[above right=0.1cm and -1cm of cross.north east, anchor=south west] (N) {$\times N_\mathrm{Dec}$};

        \node[below=0.32 cm of mainbox] (xi) {\small$x^{(0)},\dots,x^{(T-1)}$};
        \node[above=0.2 cm of mainbox] (xo) {\small$x^{(1)},\dots,x^{(T)}$};

        \draw[arrow] (z_in) -| ($(cross.south west)!0.25!(cross.south east)$);
        \draw[arrow] (z_in) -| ($(cross.south west)!0.5!(cross.south east)$);
        \draw[arrow] (xi) -- (causal);
        \draw[arrow] (cross) -- (xo);

        \draw[arrow] ($(causal.north west)!0.75!(causal.north east)$) -- ($(cross.south west)!0.75!(cross.south east)$);

        \node[above=1cm of mainbox] {Transformer Decoder};

    \end{scope}

\end{tikzpicture}
}
\caption{\textbf{Model architecture.} The encoder transforms noise tokens $\rvz_0 \sim \mathcal{N}(\mathbf{0}, \mathbf{I})$ into latent thought vectors $\rvz = U_\alpha(\rvz_0)$. The decoder generates ground tokens $\rvx$ conditioned on $\rvz$ via cross-attention at each layer. Posterior inference operates on $\rvz_0$ in the base space.}
\label{fig:model}
\vspace{-10pt}
\end{figure}
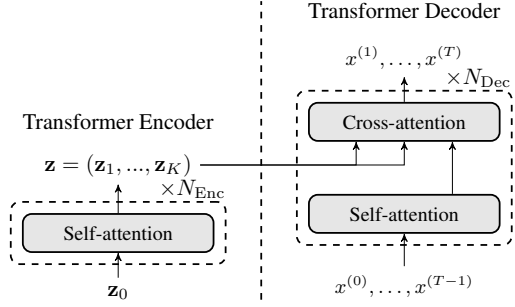

{\bf Prior Model.}
We parameterize the prior via a transport map:
\begin{equation}
\label{eq:prior}
    \rvz_0 \sim \mathcal{N}(\mathbf{0}, \mathbf{I}), \qquad \rvz = U_\alpha(\rvz_0),
\end{equation}
where $U_\alpha$ is a Transformer encoder with bidirectional attention~\citep{vaswani2017attention}. The encoder transforms noise tokens $\rvz_0$ into structured thought vectors $\rvz$; posterior inference over $\rvz$ reduces to inference over $\rvz_0$ in
the base space. Intuitively, $\rvz$ represents thought patterns instantiated as multiple tokens, while $\rvz_0$ indexes into the space of reasoning strategies encoded in the encoder parameters $\alpha$.

{\bf Thought-Guided Generator.}
The generator $p_\beta(\rvx | \rvz)$ is a Transformer decoder that incorporates thought vectors at each generation step via cross-attention:
\begin{equation}
    p_\beta(\rvx | \rvz) = \prod_{n=1}^{N} p_\beta(x^{(n)} | \rvz, x^{(n-w:n-1)}),
\end{equation}
where $x^{(n-w:n-1)}$ denotes the $w$ preceding tokens. We use a short context window ($w = 64$) so that the decoder cannot attend directly beyond 64 positions; long-range dependencies must route through $\rvz$. This architectural bottleneck encourages $\rvz$ to encode global, sequence-level structure rather than redundant local context.

Our architecture instantiates the encoder-decoder Transformer~\citep{vaswani2017attention}, repurposed for latent variable modeling. The thought vectors $\rvz$ serve as instance-specific local parameters inferred per sequence, while $\theta = (\alpha, \beta)$ comprises global parameters learned from all data.

\subsection{Learning and Inference}
\label{sec:learning}

We maximize a variational lower bound on the marginal log-likelihood $\log p_\theta(\rvx) = \log \int p_\beta(\rvx | \rvz=U_\alpha(\rvz_0)) p(\rvz_0)\, d\rvz_0$. For each sequence $\rvx$, we introduce a Gaussian variational posterior over the noise tokens $q(\rvz_0) = \mathcal{N}(\boldsymbol{\mu}, \mathrm{diag}(\boldsymbol{\sigma}^2))$,
with local parameters $(\boldsymbol{\mu}, \boldsymbol{\sigma}^2)$ specific to that sequence. The evidence lower bound (ELBO) is:
\begin{equation}
\label{eq:elbo}
    \mathcal{L}(\theta, \boldsymbol{\mu}, \boldsymbol{\sigma}^2) = \mathbb{E}_{q(\rvz_0)}\bigl[\log p_\beta(\rvx | \rvz=U_\alpha(\rvz_0))\bigr] - \KL\bigl(q(\rvz_0) \,\|\, p(\rvz_0)\bigr),
\end{equation}
optimized via the reparameterization trick~\citep{kingma2013auto}, and $p(\rvz_0)=\mathcal{N}(\mathbf{0}, \mathbf{I})$.

Crucially, $(\boldsymbol{\mu}, \boldsymbol{\sigma}^2)$ is \emph{not} amortized: we do not train an inference network. Instead, each sequence maintains its own $(\boldsymbol{\mu}, \boldsymbol{\sigma}^2)$, initialized from the prior and optimized directly. This classical variational Bayes approach~\citep{jordan1999introduction, blei2017variational} avoids the posterior collapse often observed in VAEs~\citep{bowman2016generating} for language modeling and provides per-instance precision.

{\bf Dual-Rate Optimization.}
Training alternates between two timescales:
\begin{itemize}[itemsep=2pt, topsep=2pt, leftmargin=18pt]
    \item \textbf{Fast local updates}: Optimize $ (\boldsymbol{\mu}, \boldsymbol{\sigma}^2)$ for $T_\mathrm{fast}$ steps using Adam~\citep{kingma2014adam} with a high learning rate (e.g., 0.3).
    \item \textbf{Slow global updates}: Update $\theta = (\alpha, \beta)$ with a standard learning rate (e.g., $4 \times 10^{-4}$).
\end{itemize}
This dual-rate scheme enables rapid per-instance adaptation of the latent representation while gradually accumulating shared knowledge in the encoder and decoder parameters.

At test time, inference proceeds identically: given a new sequence, we initialize $(\boldsymbol{\mu}, \boldsymbol{\sigma}^2)$ from the prior and run $T_\mathrm{fast}$ optimization steps. We use $T_\mathrm{fast} = 16$ throughout, which we find sufficient for convergence given the model's architectural constraints (short context window, moderate latent dimension). This finite-step inference-time computation is the foundation for the rethinking procedure described next.

{\bf Comparison with Latent Thought Models (LTMs).} This framework extends~\citet{kong2025latent}, who used a fixed prior $p(\rvz) = \mathcal{N}(\mathbf{0}, \mathbf{I})$ and learned only decoder parameters. Our transport-based prior $\rvz = U_\alpha(\rvz_0)$ introduces learnable encoder parameters $\alpha$, yielding a latent space whose geometry is shaped by training. This learned structure proves critical for inference-time rethinking (\cref{sec:itr}): gradient-based refinement is more effective when operating in a manifold that reflects the distribution of valid reasoning patterns.

\subsection{Inference-Time Rethinking}
\label{sec:itr}

A standard autoregressive LLM generates a reasoning trace in a single forward pass: $\rvx_r \sim p_\beta(\rvx_r | \rvx_q)$. Once tokens are committed, errors propagate and the model cannot revisit earlier reasoning steps. We propose \emph{inference-time rethinking}, an iterative procedure that enables revision by alternating between generation and latent reflection.

Given a question $\rvx_q$, the goal is to sample from the marginal
\begin{equation}
    p_\theta(\rvx_r | \rvx_q) = \int p_\beta(\rvx_r |U_\alpha(\rvz_0), \rvx_q)\, p(\rvz_0|\rvx_q)\, d\rvz_0\approx \E_{q(\rvz_0)}[p_\beta(\rvx_r |U_\alpha(\rvz_0), \rvx_q].
\end{equation}
Since direct marginalization is intractable, we employ a Gibbs-style procedure that alternates between trace generation and latent refinement:
\begin{align} 
&\text{Step (1) }\textit{Generate:} \quad  \rvx_r^{(t)} \sim p_\beta(\rvx_r | \rvz^{(t-1)}= U_\alpha(\rvz_0^{(t-1)}), \rvx_q), \label{eq:itr-step1} \\
&\text{Step (2) }\textit{Reflect:} \qquad  \rvz_0^{(t)} = \argmin_{\rvz_0} \KL\bigl(q(\rvz_0 | \rvx_r^{(t)}, \rvx_q) \,\|\, p_\theta(\rvz_0 | \rvx_r^{(t)}, \rvx_q)\bigr). \label{eq:itr-step2} 
\end{align}
Step (1) generates a candidate reasoning trace given the current latent thought. Step (2) updates the latent to better explain the generated trace, effectively critiquing the thought vectors in latent space. Unlike autoregressive generation, which commits irrevocably to each token, this loop allows the model to \textbf{revise past reasoning}: errors in $\rvx_r^{(t)}$ inform a refined $\rvz_0^{(t)}$, which then guides a corrected trace $\rvx_r^{(t+1)}$. In practice, Step (2) is optimized by maximizing the ELBO in \cref{eq:elbo} with $(\rvx_q, \rvx_r^{(t)})$ as the observed sequence.

We initialize by running variational inference on the question alone to obtain $\rvz_0^{(0)}$, then iterate Steps (1)--(2) for $T_\mathrm{rethink}$ rounds. Since the rethinking trajectory explores multiple regions of the latent manifold, we retain the trace with the highest likelihood, and extract the final answer from it.

\section{Experiments}
\label{sec:exp}
\subsection{Setup}
{\bf Training Data.}
Following~\citet{tan2025think}, we train all models from scratch on GSM8K-Aug~\citep{deng2023implicit}, an augmented version of GSM8K~\citep{cobbe2021training} containing 385K training examples. Each example pairs a question with an equation-based chain-of-thought solution. Crucially, no model uses pretrained weights, ensuring a fair comparison of learning efficiency rather than transfer from pretraining.

{\bf Evaluation Benchmarks.}
Following~\citet{liu2025marcos}, we evaluate on three benchmarks: (1)~\textbf{GSM8K}~\citep{cobbe2021training}, the original test set of 1,319 grade-school math problems (in-domain); (2)~\textbf{SVAMP}~\citep{patel2021nlp}, 4,138 arithmetic problems constructed via subtle variations in wording and semantic perturbations, testing robustness to surface-form changes (out-of-domain); (3)~\textbf{MultiArith}~\citep{roy2015solving}, 600 problems from MAWPS~\citep{koncel2016mawps} requiring multi-step arithmetic reasoning (out-of-domain). We report accuracy as the evaluation metric.

{\bf Baselines.}
We compare against CoT supervised fine-tuning (CoT-SFT)~\citep{wei2022chain} and recent latent reasoning methods: iCoT-SI~\citep{deng2024explicit}, Coconut~\citep{hao2024training}, CoLaR~\citep{tan2025think}, CODI~\citep{shen2025codi}, and MARCoS~\citep{liu2025marcos}. Baselines use Qwen2.5-0.5B or Qwen2.5-3B~\citep{team2024qwen2} as backbones; MARCoS uses a 2B model. All baseline numbers are from~\citet{liu2025marcos}.

{\bf Our Model.}
We use a 0.2B-parameter Transformer following the
Llama-2~\citep{touvron2023llama} architecture with a context window of $w = 64$, a hidden size of 1024, an 8-layer decoder, and a 2-layer encoder with 64 latent tokens. We run $T_\mathrm{fast} = 16$ inference steps for latent optimization. Note that our model is smaller than every baseline. We report results for single-pass generation (Rethink-1) and iterative rethinking with $T_\mathrm{rethink} = 30$ rounds (Rethink-30).

\subsection{Main Results}
\begin{table}[h]
\centering
\caption{Accuracy (\%) on mathematical reasoning benchmarks. All models
trained from scratch on equation-format GSM8K-Aug training set. Best in
\textbf{bold}, second-best \underline{underlined}.}
\label{tab:main}

\resizebox{0.7\linewidth}{!}{
\begin{tabular}{llccc}
\toprule
\textbf{Backbone} & \textbf{Method} & \textbf{GSM8K} & \textbf{SVAMP} & \textbf{MultiArith} \\
\midrule
\multirow{5}{*}{Qwen2.5-0.5B}
& CoT-SFT       & 20.31 & 27.92 & 41.48 \\
& iCoT-SI       & 14.81 & 18.46 & 21.33 \\
& Coconut       & 15.27 & 17.76 & 15.50 \\
& CoLaR         & 15.45 & 23.09 & 38.60 \\
& CODI          &  2.55 &  1.66 &  1.60 \\
& MARCoS (2B)   & 24.11 & 27.77 & 42.33 \\
\midrule
\multirow{6}{*}{Qwen2.5-3B}
& CoT-SFT       & 22.70 & 27.25 & 50.20 \\
& iCoT-SI       & 14.08 & 16.26 & 19.67 \\
& Coconut       &  9.95 & 13.53 &  8.78 \\
& CoLaR         & 22.60 & 25.83 & 41.83 \\
& CODI          &  1.24 &  1.12 &  1.20 \\
\midrule
\multirow{2}{*}{Ours (0.2B)}
& Rethink-1     & \underline{25.93} & \underline{47.37} & \underline{63.00} \\
& Rethink-30    & \textbf{31.54} & \textbf{51.50} & \textbf{68.00} \\
\bottomrule
\end{tabular}
}
\end{table}

\cref{tab:main} presents the results. Even without rethinking,
Rethink-1 already surpasses nearly all baselines, including models with
up to $15\times$ more parameters. This demonstrates the learning
efficiency gained by separating the latent thought vectors from the
decoder: offloading reasoning structure to $\rvz$ frees the decoder from
memorizing all patterns in its parameters, allowing a 0.2B model to
outcompete much larger monolithic architectures. With 30 rethinking
iterations, Rethink-30 further improves to 31.54\% on GSM8K, 51.50\% on
SVAMP, and 68.00\% on MultiArith, establishing new bests across all
three benchmarks with the smallest model in the comparison.

{\bf Value of Rethinking.}
The gap between Rethink-1 and Rethink-30 quantifies the benefit of
iterative refinement: 5.6 points on GSM8K, 4.1 on SVAMP, and 5.0 on
MultiArith, all from the same model with additional test-time
computation. This confirms that inference-time computation is an
effective scaling axis: a smaller model that ``thinks longer'' surpasses a larger model that generates in one pass.

{\bf Out-of-Domain Generalization.}
The gains on SVAMP and MultiArith are particularly striking. Because the
latent thought vector abstracts away surface-level token patterns, the
model's reasoning strategy transfers across different problem wordings
(SVAMP) and multi-step structures (MultiArith). Rethinking amplifies
this effect: iterative optimization in latent space refines the
underlying reasoning plan without overfitting to surface form, improving
robustness to distribution shift.

\section{Discussion}

\textbf{Likelihood as a Proxy for Correctness.} Our inference-time rethinking procedure refines the latent thought vector by maximizing the likelihood of the generated reasoning trace. This approach is effective because our model is trained on near-expert demonstrations: the learned distribution closely approximates correct reasoning, so the likelihood landscape naturally favors valid logical paths. A latent vector that produces a high-likelihood trace is statistically likely to encode a sound reasoning strategy. By iteratively optimizing the latent vector to better explain its own high-confidence generations, the model gravitates toward reasoning modes that are implicitly defined by the training data.

\textbf{Generalizing Beyond Clean Supervision.} This reliance on likelihood as a proxy for correctness has a clear limitation: it assumes high-quality training data. When the model is trained on uncurated or noisy supervision, the correlation between likelihood and correctness weakens, as high-likelihood traces may simply reflect common errors or misconceptions rather than valid reasoning. In such settings, relying solely on the generative prior and the decoder is insufficient for robust self-correction, and additional signals are needed to guide the rethinking process.

\textbf{Towards Latent Planning and Verification.}
A natural next step is to incorporate explicit value signals into the
rethinking loop. One direction is to introduce a \textit{latent
verifier} that predicts the correctness of a reasoning strategy
directly from the thought vector, before any tokens are decoded. This
would transform rethinking into a form of latent
planning~\citep{kong2024molecule,kong2024latent,qin2025generative},
where the model evaluates and selects among candidate strategies in
the continuous latent space. Another promising direction is to
leverage discrete external rewards, such as feedback from a code
compiler or a symbolic checker, to perform \textit{test-time policy
gradient optimization}~\citep{li2025latentseek} over the latent thought
vector. Regularizing toward the learned
prior naturally serves as a trust region~\citep{schulman2017proximal,noh2025latent}, keeping the optimization
within the manifold where the decoder remains well-calibrated and
bridging intuitive generation with deliberative, reward-guided search.

\section{Conclusion}
We introduced Inference-Time Rethinking, a framework that decouples
reasoning into latent thought vectors and procedural generation, and
iteratively refines the thought vector at test time via a Gibbs-style
loop. Offloading reasoning structure to latent variables enables
efficient learning, a 0.2B model already surpasses much larger
baselines in a single pass, while the continuous, training-shaped
latent space supports gradient-based self-correction through additional
compute. On GSM8K, SVAMP, and MultiArith, our model establishes new
bests across all benchmarks while remaining the smallest in the
comparison, demonstrating that inference-time computation over a
well-structured latent space is a powerful scaling axis complementary
to parameter count.

\section*{Acknowledgments}
Y.~W. is partially supported by NSF DMS-2415226, DARPA W912CG25CA007 and research gift funds from Amazon and Qualcomm. We thank Luca Zancato and Ben Bowman of Amazon AWS Research for their valuable help and discussions. We gratefully acknowledge the
support of Lambda, Inc. for providing the computational resources used in this project.

\appendix

\section{Theoretical Understanding of Inference-Time Rethinking}
\label{sec:theory}

The inference-time rethinking procedure described in \cref{sec:itr} alternates between generating a reasoning trace and refining the latent thought vector. In this section, we show that this alternating procedure admits a principled variational interpretation: it performs coordinate-ascent optimization on a variational lower bound of $\log p_\theta(\rvx_q)$, equivalently minimizing the KL divergence between a factored variational distribution and the intractable joint posterior $p_\theta(\rvx_r, \rvz_0 \mid \rvx_q)$.

\subsection{Variational Formulation}

Given a question $\rvx_q$, our generative model defines a joint distribution over reasoning traces $\rvx_r$ and latent noise tokens $\rvz_0$:
\begin{equation}
\label{eq:joint}
    p_\theta(\rvx_r, \rvz_0 \mid \rvx_q) = \frac{p_\beta(\rvx_r \mid U_\alpha(\rvz_0), \rvx_q)\, p(\rvz_0)}{p_\theta(\rvx_q)},
\end{equation}
where $p(\rvz_0) = \mathcal{N}(\mathbf{0}, \mathbf{I})$ is the base-space prior. Ideally, we would sample from this joint posterior to obtain both a good reasoning strategy ($\rvz_0$) and a corresponding trace ($\rvx_r$). However, the marginal $p_\theta(\rvx_q) = \int \sum_{\rvx_r} p_\beta(\rvx_r \mid U_\alpha(\rvz_0), \rvx_q)\, p(\rvz_0)\, d\rvz_0$ is intractable.

We introduce a \textbf{mean-field variational approximation} that factorizes over the two latent variables:
\begin{equation}
\label{eq:mean-field}
    q(\rvx_r, \rvz_0) = q_1(\rvx_r)\, q_2(\rvz_0),
\end{equation}
and seek to minimize the KL divergence to the true posterior:
\begin{equation}
\label{eq:kl-objective}
    \min_{q_1, q_2} \; \KL\bigl(q_1(\rvx_r)\, q_2(\rvz_0) \;\|\; p_\theta(\rvx_r, \rvz_0 \mid \rvx_q)\bigr).
\end{equation}

By the standard identity $\log p_\theta(\rvx_q) = \mathcal{L}(q_1, q_2) + \KL(q_1 q_2 \| p_\theta(\rvx_r, \rvz_0 \mid \rvx_q))$, minimizing \cref{eq:kl-objective} is equivalent to maximizing the evidence lower bound (ELBO):
\begin{equation}
\label{eq:elbo-joint}
    \mathcal{L}(q_1, q_2) = \mathbb{E}_{q_1(\rvx_r) q_2(\rvz_0)}\bigl[\log p_\beta(\rvx_r \mid U_\alpha(\rvz_0), \rvx_q)\bigr] - \KL\bigl(q_2(\rvz_0) \| p(\rvz_0)\bigr) + \mathcal{H}[q_1],
\end{equation}
where $\mathcal{H}[q_1]$ denotes the entropy of $q_1$.

\paragraph{Intuition.} The ELBO in \cref{eq:elbo-joint} balances three objectives: (i) the expected log-likelihood encourages the latent thought and the trace to be mutually consistent (the thought should explain the trace, and the trace should be plausible under the thought); (ii) the KL term anchors the latent to the prior, preventing degenerate solutions; and (iii) the entropy term encourages exploration over traces, avoiding premature commitment to a single solution.

\subsection{Coordinate Ascent Recovers the Rethinking Procedure}

We now show that the two steps of inference-time rethinking (\cref{eq:itr-step1,eq:itr-step2}) arise as coordinate-ascent updates on the ELBO in \cref{eq:elbo-joint}, under delta (point-mass) approximations.

\begin{proposition}[Inference-Time Rethinking as Coordinate Ascent]
\label{prop:rethinking}
Consider the variational objective in \cref{eq:kl-objective} with the mean-field factorization $q(\rvx_r, \rvz_0) = q_1(\rvx_r)\, q_2(\rvz_0)$. The optimal coordinate-ascent updates are:
\begin{enumerate}[label=(\roman*), itemsep=4pt, leftmargin=24pt]
    \item \textbf{Optimal $q_1^*$ (fix $q_2$):} The optimal trace distribution satisfies
    \begin{equation}
    \label{eq:opt-q1}
        \log q_1^*(\rvx_r) = \mathbb{E}_{q_2(\rvz_0)}\bigl[\log p_\beta(\rvx_r \mid U_\alpha(\rvz_0), \rvx_q)\bigr] + \mathrm{const}.
    \end{equation}
    When $q_2 = \delta(\rvz_0 - \hat{\rvz}_0)$ is a point mass, this reduces to the conditional:
    \begin{equation}
    \label{eq:opt-q1-delta}
        q_1^*(\rvx_r) = p_\beta(\rvx_r \mid U_\alpha(\hat{\rvz}_0), \rvx_q),
    \end{equation}
    which is realized by sampling $\rvx_r^{(t)} \sim p_\beta(\rvx_r \mid U_\alpha(\rvz_0^{(t-1)}), \rvx_q)$, i.e., the generation step (\cref{eq:itr-step1}).

    \item \textbf{Optimal $q_2^*$ (fix $q_1$):} The optimal latent distribution satisfies
    \begin{equation}
    \label{eq:opt-q2}
        \log q_2^*(\rvz_0) = \mathbb{E}_{q_1(\rvx_r)}\bigl[\log p_\beta(\rvx_r \mid U_\alpha(\rvz_0), \rvx_q)\bigr] + \log p(\rvz_0) + \mathrm{const}.
    \end{equation}
    When $q_1 = \delta(\rvx_r - \hat{\rvx}_r)$ is a point mass, this yields
    \begin{equation}
    \label{eq:opt-q2-delta}
        q_2^*(\rvz_0) \propto p_\beta(\hat{\rvx}_r \mid U_\alpha(\rvz_0), \rvx_q)\, p(\rvz_0),
    \end{equation}
    which is the true posterior $p_\theta(\rvz_0 \mid \hat{\rvx}_r, \rvx_q)$. Approximating this posterior via ELBO maximization recovers the reflection step (\cref{eq:itr-step2}).
\end{enumerate}
\end{proposition}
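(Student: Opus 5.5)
The plan is to derive both updates from the ELBO in \cref{eq:elbo-joint} using the standard mean-field argument: fix one factor, rewrite the ELBO as a negative KL divergence in the other factor plus a constant, and read off the minimizer. The identity $\log p_\theta(\rvx_q) = \mathcal{L}(q_1,q_2) + \KL(q_1 q_2 \,\|\, p_\theta(\rvx_r,\rvz_0\mid\rvx_q))$ follows directly from \cref{eq:joint} by taking expectations of $\log q_1 q_2 - \log p_\theta(\rvx_r,\rvz_0\mid \rvx_q)$. We may therefore work with $\mathcal{L}$ alone.

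\textbf{Part (i).} Fix $q_2$ and define
\[
  f(\rvx_r) = \E_{q_2(\rvz_0)}\bigl[\log p_\beta(\rvx_r \mid U_\alpha(\rvz_0), \rvx_q)\bigr].
\]
The term $-\KL(q_2\|p)$ does not depend on $q_1$, so
\[
  \mathcal{L} = \sum_{\rvx_r} q_1(\rvx_r) f(\rvx_r) + \mathcal{H}[q_1] + c .
\]
Let $Z = \sum_{\rvx_r} e^{f(\rvx_r)}$. This sum is finite because traces range over a finite vocabulary with bounded length, or alternatively because $e^{f} \le \E_{q_2}[p_\beta]$ by Jensen, and that sums to at most $1$. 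With $\tilde q_1 = e^{f}/Z$, the bound becomes
\[
  \mathcal{L} = -\KL(q_1 \,\|\, \tilde q_1) + \log Z + c ,
\]
which is maximized uniquely at $q_1^* = \tilde q_1$. This gives \cref{eq:opt-q1}.

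For the point mass $q_2 = \delta(\rvz_0 - \hat{\rvz}_0)$, the expectation collapses to $f(\rvx_r) = \log p_\beta(\rvx_r \mid U_\alpha(\hat{\rvz}_0), \rvx_q)$. Since this is already normalized, $Z = 1$ and we obtain \cref{eq:opt-q1-delta}. Ancestral sampling from the autoregressive decoder then realizes $q_1^*$, which is \cref{eq:itr-step1}.

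\textbf{Part (ii).} Fix $q_1$ and collect the terms of $\mathcal{L}$ that depend on $q_2$:
\[
  \E_{q_2}\Bigl[\E_{q_1}\bigl[\log p_\beta(\rvx_r \mid U_\alpha(\rvz_0), \rvx_q)\bigr] + \log p(\rvz_0) - \log q_2(\rvz_0)\Bigr].
\]
Define $g(\rvz_0) = \E_{q_1}\bigl[\log p_\beta(\rvx_r \mid U_\alpha(\rvz_0), \rvx_q)\bigr] + \log p(\rvz_0)$. The same completion-to-KL argument then gives $q_2^* \propto e^{g}$, which is \cref{eq:opt-q2}. Normalizability holds because $e^{g} \le p(\rvz_0)$, as each $p_\beta \le 1$.

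With $q_1 = \delta(\rvx_r - \hat{\rvx}_r)$ we get \cref{eq:opt-q2-delta}. By Bayes' rule applied to \cref{eq:joint}, this equals $p_\theta(\rvz_0 \mid \hat{\rvx}_r, \rvx_q)$. Finally, restrict $q_2$ to the Gaussian family $\mathcal{N}(\boldsymbol\mu, \mathrm{diag}(\boldsymbol\sigma^2))$ and minimize $\KL(q_2 \,\|\, p_\theta(\rvz_0 \mid \hat{\rvx}_r, \rvx_q))$. This is equivalent, up to the constant $\log p_\theta(\hat{\rvx}_r \mid \rvx_q)$, to maximizing \cref{eq:elbo} on the sequence $(\rvx_q, \hat{\rvx}_r)$, which is exactly \cref{eq:itr-step2}.

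\textbf{Main obstacle.} The delicate step is justifying the point-mass approximations. A delta is not a valid argument to the entropy or KL terms: $\mathcal{H}[\delta]$ on a discrete space is $0$, which is fine, but a continuous delta for $q_2$ has entropy $-\infty$. I would handle this by treating the delta reductions as limits rather than as exact optima. Plug the point mass only into the \emph{fixed} factor's expectation, which is legitimate because that expectation is well defined for any distribution. Do not claim that the delta is itself optimal. Sampling a single $\rvx_r^{(t)}$ from $q_1^*$ and then using $\delta(\rvx_r - \rvx_r^{(t)})$ is then presented as a one-sample Monte Carlo estimate of the expectation over $q_1$ in \cref{eq:opt-q2}. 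Similarly, taking the Gaussian mean as $\hat{\rvz}_0$ is presented as a MAP-style collapse. The proof should state explicitly that only the exact mean-field updates are optimal, and that the delta versions are stochastic or point approximations of them.
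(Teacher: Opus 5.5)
Your proposal is correct and follows the paper's route, the standard mean-field coordinate-ascent derivation. The one difference is that you rewrite the $q_1$-dependent part as $\mathcal{L}=-\KL(q_1\,\|\,e^{f}/Z)+\log Z+c$ instead of setting a functional derivative to zero, which gives you global optimality and uniqueness, explicit normalizability, and a more careful treatment of the point masses than the paper's proof provides.

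One step needs a fix. Under \cref{eq:joint}, minimizing $\KL(q_2\,\|\,p_\theta(\rvz_0\mid\hat{\rvx}_r,\rvx_q))$ over Gaussians is equivalent to maximizing $\E_{q_2}[\log p_\beta(\hat{\rvx}_r\mid U_\alpha(\rvz_0),\rvx_q)]-\KL(q_2\,\|\,p(\rvz_0))$. By contrast, \cref{eq:elbo} on the full sequence $(\rvx_q,\hat{\rvx}_r)$ also contains $\E_{q_2}[\log p_\beta(\rvx_q\mid U_\alpha(\rvz_0))]$, which is not constant in $q_2$. So your ``exactly'' holds only if the likelihood is restricted to the trace tokens, or if \cref{eq:joint} is augmented with the factor $p_\beta(\rvx_q\mid U_\alpha(\rvz_0))$, which would then also appear in \cref{eq:opt-q2}. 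This mismatch is inherited from the paper itself.
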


\begin{proof}
This follows from the standard mean-field coordinate-ascent derivation~\citep{bishop2006pattern, blei2017variational}. For a factored approximation $q(\rvx_r, \rvz_0) = q_1(\rvx_r)\, q_2(\rvz_0)$, the functional derivative of the ELBO with respect to $q_1$, holding $q_2$ fixed, yields the optimality condition:
\begin{equation}
    \log q_1^*(\rvx_r) = \mathbb{E}_{q_2(\rvz_0)}\bigl[\log p_\theta(\rvx_r, \rvz_0 \mid \rvx_q)\bigr] + \mathrm{const}.
\end{equation}
Expanding the joint using \cref{eq:joint} and absorbing terms independent of $\rvx_r$ into the constant gives \cref{eq:opt-q1}. Under the point-mass assumption $q_2 = \delta(\rvz_0 - \hat{\rvz}_0)$, the expectation collapses, yielding $q_1^*(\rvx_r) \propto p_\beta(\rvx_r \mid U_\alpha(\hat{\rvz}_0), \rvx_q)$, which is already normalized. The derivation for $q_2^*$ is analogous: setting the functional derivative with respect to $q_2$ to zero, absorbing $\rvx_r$-independent terms, and substituting $q_1 = \delta(\rvx_r - \hat{\rvx}_r)$ gives \cref{eq:opt-q2-delta}.
\end{proof}

\paragraph{Intuition.} The two steps have complementary roles. In Step~(i), the model generates the best trace it can given its current understanding of the problem (encoded in $\hat{\rvz}_0$). This is analogous to a student writing out a solution based on their current plan. In Step~(ii), the model examines the trace it just produced and asks: ``what latent thought would best explain this reasoning?'' By optimizing $\rvz_0$ to maximize the likelihood of the generated trace (regularized by the prior), the model revises its internal plan. This is analogous to the student re-reading their work and adjusting their strategy. The alternation allows errors in the trace to inform corrections in the thought, which then guides an improved trace in the next round.

\subsection{Monotonic Convergence}

A key consequence of the coordinate-ascent structure is the following convergence guarantee.

\begin{proposition}[Monotonic ELBO Improvement]
\label{prop:monotone}
Let $\{(q_1^{(t)}, q_2^{(t)})\}_{t=0}^{T_\mathrm{rethink}}$ be the sequence of variational distributions produced by the rethinking procedure. If each coordinate update does not decrease the ELBO, i.e., the $q_1$-update in \cref{eq:opt-q1-delta} and the $q_2$-update in \cref{eq:opt-q2-delta} each (approximately) maximize $\mathcal{L}$ with respect to their factor, then:
\begin{equation}
\label{eq:monotone}
    \mathcal{L}(q_1^{(t+1)}, q_2^{(t+1)}) \geq \mathcal{L}(q_1^{(t)}, q_2^{(t)}) \quad \text{for all } t \geq 0.
\end{equation}
Equivalently, the KL divergence to the joint posterior is monotonically non-increasing:
\begin{equation}
    \KL\bigl(q_1^{(t+1)} q_2^{(t+1)} \| p_\theta(\rvx_r, \rvz_0 \mid \rvx_q)\bigr) \leq \KL\bigl(q_1^{(t)} q_2^{(t)} \| p_\theta(\rvx_r, \rvz_0 \mid \rvx_q)\bigr).
\end{equation}
\end{proposition}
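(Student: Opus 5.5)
The argument is the usual chaining of coordinate-ascent inequalities, followed by the identity $\log p_\theta(\rvx_q) = \mathcal{L}(q_1,q_2) + \KL(q_1 q_2 \,\|\, p_\theta(\rvx_r,\rvz_0\mid\rvx_q))$ stated before \cref{eq:elbo-joint}.

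First I fix the indexing. One rethinking round consists of a $q_1$-update followed by a $q_2$-update. Starting from $(q_1^{(t)}, q_2^{(t)})$, the generation step (\cref{eq:itr-step1}) produces $q_1^{(t+1)}$ from $q_2^{(t)}$. The reflection step (\cref{eq:itr-step2}) then produces $q_2^{(t+1)}$ from $q_1^{(t+1)}$. This introduces the intermediate pair $(q_1^{(t+1)}, q_2^{(t)})$.

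The hypothesis says each coordinate update does not decrease $\mathcal{L}$ with respect to its own factor. Applying it to the $q_1$-update gives $\mathcal{L}(q_1^{(t+1)}, q_2^{(t)}) \ge \mathcal{L}(q_1^{(t)}, q_2^{(t)})$. Applying it to the $q_2$-update gives $\mathcal{L}(q_1^{(t+1)}, q_2^{(t+1)}) \ge \mathcal{L}(q_1^{(t+1)}, q_2^{(t)})$. Chaining the two inequalities yields \cref{eq:monotone}.

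To connect this to \cref{prop:rethinking}, I note that when a coordinate update is an exact maximizer, the inequality is immediate. The update equals $\arg\max$ over that factor, and the previous factor is a feasible competitor, so the new value is at least the old one. In the approximate case (finite $T_\mathrm{fast}$ Adam steps on the ELBO, or a sampled trace), I simply invoke the stated non-decrease hypothesis. A natural safeguard would make the non-decrease hold automatically: initialize the $q_2$ optimization at the previous $(\boldsymbol{\mu},\boldsymbol{\sigma}^2)$ and accept a step only if the ELBO improves.

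For the KL statement I use the identity above. Here $\log p_\theta(\rvx_q)$ does not depend on $(q_1,q_2)$, so $\KL = \log p_\theta(\rvx_q) - \mathcal{L}$. Monotone non-decrease of $\mathcal{L}$ is therefore exactly monotone non-increase of the KL.

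The main obstacle is the delta approximations. If $q_1$ is a point mass on a discrete trace, then $\mathcal{H}[q_1] = 0$, which is finite, so discrete point masses cause no trouble. A continuous point mass $q_2 = \delta$, however, makes $\KL(q_2\|p)$ infinite and the ELBO degenerate. I would handle this by interpreting the iterates within the families actually used. For $q_1$, I take a point mass on the sampled trace, or the conditional itself in \cref{eq:opt-q1-delta}. For $q_2$, I take the Gaussian family $\mathcal{N}(\boldsymbol{\mu},\mathrm{diag}(\boldsymbol{\sigma}^2))$, and I use its mean $\boldsymbol{\mu}$ as the conditioning point $\hat{\rvz}_0$. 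With these families every $\mathcal{L}$ is finite, and the chained inequalities are meaningful. Monotonicity holds by hypothesis in this restricted family. Since $\mathcal{L}$ is bounded above by $\log p_\theta(\rvx_q)$, the ELBO sequence also converges.
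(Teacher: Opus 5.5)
Your proof is correct and follows the paper's argument: you chain the two coordinate-wise non-decrease inequalities, make the intermediate pair $(q_1^{(t+1)}, q_2^{(t)})$ explicit where the paper leaves it implicit, convert to the KL statement via $\KL = \log p_\theta(\rvx_q) - \mathcal{L}$, and get convergence from the upper bound. Your extra observation is a valid refinement that the paper's proof does not address: a point-mass $q_2$ makes $\KL(q_2\,\|\,p(\rvz_0))$ infinite and the ELBO $-\infty$, so the inequalities are only meaningful with $q_2$ taken in the Gaussian family actually used.
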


\begin{proof}
Each coordinate update maximizes the ELBO over one factor while holding the other fixed. Since the ELBO is the negative of the KL (up to the constant $\log p_\theta(\rvx_q)$), and each update either increases or preserves the ELBO, the sequence $\{\mathcal{L}^{(t)}\}$ is non-decreasing. Since $\mathcal{L} \leq \log p_\theta(\rvx_q)$ is bounded above, the sequence converges.
\end{proof}

\paragraph{Intuition.} This result provides a principled justification for why more rethinking iterations consistently help (\cref{tab:main}): each round of generate-then-reflect is guaranteed to bring the variational approximation closer to (or at least no farther from) the true joint posterior over thoughts and traces. The model is not randomly searching; it is systematically descending a well-defined objective.

\subsection{Connections and Remarks}
\label{sec:connections}

\paragraph{Connection to Variational EM.} The rethinking procedure can be viewed as a form of variational EM~\citep{neal1998view} applied at test time with fixed model parameters $\theta$. The generation step (updating $q_1$) corresponds to an ``E-step'' that imputes the missing trace, while the reflection step (updating $q_2$) corresponds to a second ``E-step'' that refines the latent variable. Both steps optimize the same ELBO, making this a pure inference procedure rather than a learning algorithm.

\paragraph{Connection to the Wake-Sleep Algorithm.} There is a structural parallel with wake-sleep~\citep{hinton1995wake}: the generation step (sampling from the generative model) resembles the ``sleep'' phase, while the reflection step (inferring latent variables from observations) resembles the ``wake'' phase. However, a key difference is that our procedure optimizes a single consistent objective (the ELBO), whereas the classical sleep phase minimizes a reverse KL that is not jointly consistent with the wake objective.

\paragraph{Approximation Quality.} Two sources of approximation merit discussion. First, the point-mass (delta) approximation for $q_1$ and $q_2$ sacrifices distributional uncertainty for computational efficiency; a richer family (e.g., Gaussian $q_2$, sampling multiple traces for $q_1$) would yield tighter bounds. Second, the reflection step uses finite gradient steps rather than exact optimization, so the per-step ELBO increase is approximate. 

\paragraph{From Likelihood to Reward.} The current formulation uses likelihood as a proxy for correctness, which is effective when training data consists of expert demonstrations (\cref{sec:exp}). The variational framework naturally accommodates extensions: replacing the log-likelihood with an external reward signal $R(\rvx_r)$ (e.g., from a verifier or symbolic checker) transforms the reflection step into test-time policy gradient optimization over the latent space, while the KL regularizer to the prior serves as a trust region~\citep{schulman2017proximal} that keeps the optimization on the learned manifold.

\bibliography{iclr2026_conference}

@book{bishop2006pattern,
  title={Pattern Recognition and Machine Learning},
  author={Bishop, Christopher M.},
  year={2006},
  publisher={Springer}
}

@incollection{neal1998view,
  title={A View of the {EM} Algorithm that Justifies Incremental, Sparse, and Other Variants},
  author={Neal, Radford M. and Hinton, Geoffrey E.},
  booktitle={Learning in Graphical Models},
  editor={Jordan, Michael I.},
  pages={355--368},
  year={1998},
  publisher={Kluwer Academic Publishers}
}

@article{hinton1995wake,
  title={The Wake-Sleep Algorithm for Unsupervised Neural Networks},
  author={Hinton, Geoffrey E. and Dayan, Peter and Frey, Brendan J. and Neal, Radford M.},
  journal={Science},
  volume={268},
  number={5214},
  pages={1158--1161},
  year={1995},
  publisher={American Association for the Advancement of Science}
}

@article{schulman2017proximal,
  title={Proximal Policy Optimization Algorithms},
  author={Schulman, John and Wolski, Filip and Dhariwal, Prafulla and Radford, Alec and Klimov, Oleg},
  journal={arXiv preprint arXiv:1707.06347},
  year={2017}
}

@article{li2025latentseek,
  title={Seek in the Dark: Reasoning via Test-Time Instance-Level Policy Gradient in Latent Space},
  author={Li, Hengli and Li, Chenxi and Wu, Tong and Zhu, Xuekai and Wang, Yuxuan and Yu, Zhaoxin and Jiang, Eric Hanchen and Zhu, Song-Chun and Jia, Zixia and Wu, Ying Nian and Zheng, Zilong},
  journal={arXiv preprint arXiv:2505.13308},
  year={2025}
}

@inproceedings{kong2025latent,
  title     = {Latent Thought Models with Variational Bayes Inference-Time Computation},
  author    = {Deqian Kong and Minglu Zhao and Dehong Xu and Bo Pang and Shu Wang and
               Edouardo Honig and Zhangzhang Si and Chuan Li and Jianwen Xie and
               Sirui Xie and Ying Nian Wu},
  booktitle = {42nd International Conference on Machine Learning (ICML)},
  year      = {2025}
}

@inproceedings{noh2025latent,
  title     = {Latent Adaptive Planner for Dynamic Manipulation},
  author    = {Noh, Donghun and Kong, Deqian and Zhao, Minglu and Lizarraga, Andrew and Xie, Jianwen and Wu, Ying Nian and Hong, Dennis},
  booktitle = {Proceedings of the 9th Conference on Robot Learning (CoRL)},
  series    = {Proceedings of Machine Learning Research},
  volume    = {305},
  pages     = {2430--2448},
  year      = {2025},
}

@article{qin2025generative,
  title={Generative Actor Critic},
  author={Qin, Aoyang and Kong, Deqian and Wang, Wei and Wu, Ying Nian and Zhu, Song-Chun and Xie, Sirui},
  journal={arXiv preprint arXiv:2512.21527},
  year={2025}
}

@article{friederici2011brain,
  title={The brain basis of language processing: from structure to function},
  author={Friederici, Angela D},
  journal={Physiological reviews},
  volume={91},
  number={4},
  pages={1357--1392},
  year={2011},
  publisher={American Physiological Society}
}

@article{mcclelland1995complementary,
  title={Why there are complementary learning systems in the hippocampus and neocortex: insights from the successes and failures of connectionist models of learning and memory},
  author={McClelland, James L and McNaughton, Bruce L and O'Reilly, Randall C},
  journal={Psychological review},
  volume={102},
  number={3},
  pages={419},
  year={1995},
  publisher={American Psychological Association}
}

@article{touvron2023llama,
  title={Llama 2: Open foundation and fine-tuned chat models},
  author={Touvron, Hugo and Martin, Louis and Stone, Kevin and Albert, Peter and Almahairi, Amjad and Babaei, Yasmine and Bashlykov, Nikolay and Batra, Soumya and Bhargava, Prajjwal and Bhosale, Shruti and others},
  journal={arXiv preprint arXiv:2307.09288},
  year={2023}
}

@article{patel2021nlp,
  title={Are NLP models really able to solve simple math word problems?},
  author={Patel, Arkil and Bhattamishra, Satwik and Goyal, Navin},
  journal={arXiv preprint arXiv:2103.07191},
  year={2021}
}

@inproceedings{koncel2016mawps,
  title={MAWPS: A math word problem repository},
  author={Koncel-Kedziorski, Rik and Roy, Subhro and Amini, Aida and Kushman, Nate and Hajishirzi, Hannaneh},
  booktitle={Proceedings of the 2016 conference of the north american chapter of the association for computational linguistics: human language technologies},
  pages={1152--1157},
  year={2016}
}

@inproceedings{roy2015solving,
  title={Solving general arithmetic word problems},
  author={Roy, Subhro and Roth, Dan},
  booktitle={Proceedings of the 2015 conference on empirical methods in natural language processing},
  pages={1743--1752},
  year={2015}
}

@article{liu2025marcos,
  title={Marcos: Deep thinking by markov chain of continuous thoughts},
  author={Liu, Jiayu and Huang, Zhenya and Sims, Anya and Chen, Enhong and Teh, Yee Whye and Miao, Ning},
  journal={arXiv preprint arXiv:2509.25020},
  year={2025}
}

@article{shen2025codi,
  title={Codi: Compressing chain-of-thought into continuous space via self-distillation},
  author={Shen, Zhenyi and Yan, Hanqi and Zhang, Linhai and Hu, Zhanghao and Du, Yali and He, Yulan},
  journal={arXiv preprint arXiv:2502.21074},
  year={2025}
}

@article{deng2024explicit,
  title={From explicit cot to implicit cot: Learning to internalize cot step by step},
  author={Deng, Yuntian and Choi, Yejin and Shieber, Stuart},
  journal={arXiv preprint arXiv:2405.14838},
  year={2024}
}

@article{team2024qwen2,
  title={Qwen2 technical report},
  author={Team, Qwen and others},
  journal={arXiv preprint arXiv:2407.10671},
  volume={2},
  number={3},
  year={2024}
}

@article{tan2025think,
  title={Think Silently, Think Fast: Dynamic Latent Compression of LLM Reasoning Chains},
  author={Tan, Wenhui and Li, Jiaze and Ju, Jianzhong and Luo, Zhenbo and Luan, Jian and Song, Ruihua},
  journal={arXiv preprint arXiv:2505.16552},
  year={2025}
}

@article{deng2023implicit,
  title={Implicit chain of thought reasoning via knowledge distillation},
  author={Deng, Yuntian and Prasad, Kiran and Fernandez, Roland and Smolensky, Paul and Chaudhary, Vishrav and Shieber, Stuart},
  journal={arXiv preprint arXiv:2311.01460},
  year={2023}
}

@inproceedings{vaswani2017attention,
  title={Attention is all you need},
  author={Vaswani, Ashish and Shazeer, Noam and Parmar, Niki and Uszkoreit, Jakob and Jones, Llion and Gomez, Aidan N and Kaiser, {\L}ukasz and Polosukhin, Illia},
  booktitle={Advances in Neural Information Processing Systems},
  volume={30},
  pages={5998--6008},
  year={2017}
}

@article{kingma2014adam,
  title={Adam: A method for stochastic optimization},
  author={Kingma, Diederik P and Ba, Jimmy},
  journal={arXiv preprint arXiv:1412.6980},
  year={2014}
}

@article{jordan1999introduction,
  title={An introduction to variational methods for graphical models},
  author={Jordan, Michael I and Ghahramani, Zoubin and Jaakkola, Tommi S and Saul, Lawrence K},
  journal={Machine learning},
  volume={37},
  number={2},
  pages={183--233},
  year={1999},
  publisher={Springer}
}

@article{ullman2004contributions,
  title={Contributions of memory circuits to language: The declarative/procedural model},
  author={Ullman, Michael T},
  journal={Cognition},
  volume={92},
  number={1-2},
  pages={231--270},
  year={2004},
  publisher={Elsevier}
}

@article{blei2017variational,
  title={Variational inference: A review for statisticians},
  author={Blei, David M and Kucukelbir, Alp and McAuliffe, Jon D},
  journal={Journal of the American Statistical Association},
  volume={112},
  number={518},
  pages={859--877},
  year={2017}
}

@inproceedings{bowman2016generating,
  title={Generating sentences from a continuous space},
  author={Bowman, Samuel R and Vilnis, Luke and Vinyals, Oriol and Dai, Andrew M and Jozefowicz, Rafal and Bengio, Samy},
  booktitle={Proceedings of the 20th SIGNLL Conference on Computational Natural Language Learning},
  pages={10--21},
  year={2016}
}

@article{kingma2013auto,
  title={Auto-encoding variational bayes},
  author={Kingma, Diederik P and Welling, Max},
  journal={arXiv preprint arXiv:1312.6114},
  year={2013}
}

@article{wei2022chain,
  title={Chain-of-thought prompting elicits reasoning in large language models},
  author={Wei, Jason and Wang, Xuezhi and Schuurmans, Dale and Bosma, Maarten and Xia, Fei and Chi, Ed and Le, Quoc V and Zhou, Denny and others},
  journal={Advances in neural information processing systems},
  volume={35},
  pages={24824--24837},
  year={2022}
}

@article{cobbe2021training,
  title={Training verifiers to solve math word problems},
  author={Cobbe, Karl and Kosaraju, Vineet and Bavarian, Mohammad and Chen, Mark and Jun, Heewoo and Kaiser, Lukasz and Plappert, Matthias and Tworek, Jerry and Hilton, Jacob and Nakano, Reiichiro and others},
  journal={arXiv preprint arXiv:2110.14168},
  year={2021}
}

@article{hao2024training,
  title={Training large language models to reason in a continuous latent space},
  author={Hao, Shibo and Sukhbaatar, Sainbayar and Su, DiJia and Li, Xian and Hu, Zhiting and Weston, Jason and Tian, Yuandong},
  journal={arXiv preprint arXiv:2412.06769},
  year={2024}
}

@article{kong2024latent,
  title={Latent plan transformer for trajectory abstraction: Planning as latent space inference},
  author={Kong, Deqian and Xu, Dehong and Zhao, Minglu and Pang, Bo and Xie, Jianwen and Lizarraga, Andrew and Huang, Yuhao and Xie, Sirui and Wu, Ying Nian},
  journal={Advances in Neural Information Processing Systems},
  volume={37},
  pages={123379--123401},
  year={2024}
}

@article{kong2024molecule,
  title={Molecule design by latent prompt transformer},
  author={Kong, Deqian and Huang, Yuhao and Xie, Jianwen and Honig, Edouardo and Xu, Ming and Xue, Shuanghong and Lin, Pei and Zhou, Sanping and Zhong, Sheng and Zheng, Nanning and others},
  journal = {Advances in Neural Information Processing Systems},
 pages = {89069--89097},
 volume = {37},
 year = {2024}

}
\bibliographystyle{iclr2026_conference}


\end{document}